\documentclass[a4paper,twoside]{article}

\usepackage{epsfig}
\usepackage{calc}
\usepackage{amssymb}
\usepackage{amstext}
\usepackage{amsmath}
\usepackage{amsthm}
\usepackage{multicol}
\usepackage{pslatex}
\usepackage{apalike}
\usepackage{SCITEPRESS}    
\usepackage[]{algorithm2e}
\usepackage{multirow}

\usepackage{graphicx}
\usepackage{subcaption}

\newtheorem{definition}{Definition}
\newtheorem{theorem}{Theorem}

\newcommand{\cmmnt}[1]{\ignorespaces}

\begin{document}

\title{Area Protection in Adversarial Path-Finding Scenarios with Multiple Mobile Agents on Graphs
\subtitle{a theoretical and experimental study of target-allocation strategies for defense coordination} }

\author{\authorname{Marika Ivanov\'{a}\sup{1}, Pavel Surynek\sup{2}}
\affiliation{\sup{1}University of Bergen, Realfagbygget, All\'{e}gaten 41, 5020 Bergen, Norway}
\affiliation{\sup{2}National Institute of Advanced Industrial Science and Technology (AIST), 2-3-26, Aomi, Koto-ku, Tokyo 135-0064, Japan }
\email{marika.ivanova@uib.no, pavel.surynek@aist.go.jp}
}

\keywords{graph-based path-finding, area protection, area invasion, asymmetric goals, mobile agents, agent navigation, defensive strategies, adversarial planning}

\abstract{We address a problem of area protection in graph-based scenarios with multiple agents. The problem consists of two adversarial teams of agents that move in an undirected graph shared by both teams. Agents are placed in vertices of the graph; at most one agent can occupy a vertex; and they can move into adjacent vertices in a conflict free way. Teams have asymmetric goals: the aim of one team - {\em attackers} - is to invade into given area while the aim of the opponent team - {\em defenders} - is to protect the area from being entered by attackers by occupying selected vertices.
We study strategies for allocating vertices to be occupied by the team of defenders to block attacking agents. We show that the decision version of the problem of area protection is PSPACE-hard under the assumption that agents can allocate their target vertices multiple times.
Further we develop various on-line vertex-allocation strategies for the defender team in a simplified variant of the problem with single stage vertex allocation and evaluated their performance in multiple benchmarks. The success of a strategy is heavily dependent on the type of the instance, and so one of the contributions of this work is that we identify suitable vertex-allocation strategies for diverse instance types. In particular, we introduce a simulation-based method that identifies and tries to capture bottlenecks in the graph, that are frequently used by the attackers. Our experimental evaluation suggests that this method often allows a successful defense even in instances where the attackers significantly outnumber the defenders.     }

\onecolumn \maketitle \normalsize \vfill

\section{\uppercase{Introduction}}
\label{sec:introduction}
\noindent
\begin{figure}[!h]
  \centering
   {\epsfig{file = 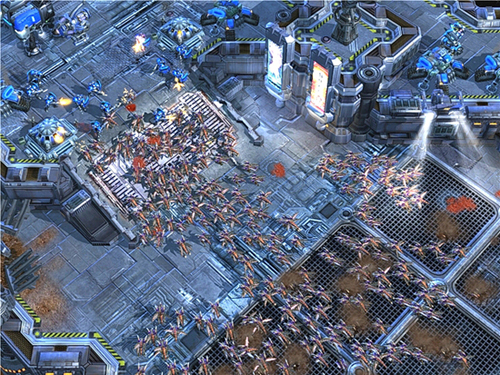, width = 5.0cm}}
  \caption{A motivation example of an adversarial path-finding scenario}
  \label{fig:starcraft}
 \end{figure}
In this work we study an {\em Area Protection Problem} (APP) with multiple mobile agents moving in a conflict free way. APP can be regarded as a modification of known problem of {\em Adversarial Cooperative Path Finding} (ACPF) \cite{IvanovaS14} where two teams of agents compete in reaching their target positions. Unlike ACPF, where the goals of teams of agents are symmetric, the adversarial teams in APP have different objectives. The first team of {\em attackers} contains agents whose goal is to reach a pre-defined target location in the area being protected by the second team of {\em defenders}. Each attacker has a unique target in the protected area and each target is assigned to exactly one attacker. The opponent team of defenders tries to prevent the attackers from reaching their targets by occupying selected locations so that they cannot be passed by attackers.

The common feature of APP and ACPF is that once a location is occupied by an agent it cannot be entered by another agent until it is first vacated by the agent which occupies it (opposing agent cannot push it out). This property is exploited both in competition for reaching goals in ACPF so that a team of agents may try to slow down the opponent by occupation of certain locations as well as in APP where this is a key tool for the team of defenders.

APP has many real-life motivations from the domains of access denial operations both in civil and military sector, robotics with adversarial teams of robots or other type of penetrators \cite{DBLP:journals/jair/AgmonKK11}, and computer games (see Figure \ref{fig:starcraft}).

Our contribution consists in analysis of computational complexity of APP. Particularly we show that APP is PSPACE-hard. Next we contribute by suggesting several on-line solving algorithms for the defender team that allocate suitable vertices to be occupied so that attacker agents cannot pass into the protected area. We identified suitable vertex allocation strategies for diverse types of APP instances and tested them thoroughly.

\subsection{Related Works}
Movements of agents at low reactive level are assumed to be planned by some {\em cooperative path-finding - CPF} ({\em multi-agent path-finding} - MAPF) \cite{Silver05,Ryan08,WangB11} algorithm where agents of own team cooperate while opposing agents are considered as obstacles. In CPF the task is to plan movement of agents so that each agent reaches its unique target in a conflict free manner.

There exist multiple CPF algorithms both complete and incomplete as well as optimal and sub-optimal under various objective functions. It is known that any known optimization version of CPF is an NP-hard problem \cite{RatnerW90,YuL13}. Many efficient optimal algorithms that introduce advanced search space transformations like {\em CBS} \cite{SharonSFS15} or {\em ICTS} \cite{SharonSGF13} and compilation based methods that cast CPF to a different formalism \cite{SurynekFSB16} have been introduced recently. However scalability of these optimal algorithms is limited which makes them unsuitable in our reactive setup where massive numbers of agents is expected.

Suboptimal CPF algorithms include rule-based polynomial time methods and search-based algorithms. Rule-based algorithms like {\em BIBOX} \cite{Surynek14} and {\em Push-and-Swap} \cite{LunaB11,WildeMW14} guarantee finding solution in polynomial time. These algorithms scale up well for large number of agents and large graphs however solutions generated by them are usually very far from the optimum with respect to any common objective.

A good compromise between optimal and rule-based algorithms is represented by suboptimal/incomplete search based methods which are derived from the standard {\em A*} algorithm. These methods include {\em LRA*}, {\em CA*}, {\em HCA*}, and {\em WHCA*} \cite{Silver05}. They provide solutions where individual paths of agents tend to be close to respective shortest paths connecting agents' locations and their targets. Conflict avoidance among agents is implemented via a so called reservation table in case of {\em CA*}, {\em HCA*}, and {\em WHCA*} while {\em LRA*} relies on replanning whenever a conflict occurs. Since our setting in APP is inherently suitable for a replanning algorithm {\em LRA*} is a candidate for underlying CPF algorithm for APP. Moreover {\em LRA*} is scalable for large number of agents.

Aside from CPF algorithms, systems with mobile agents that act in the adversarial manner represent another related area. These studies often focus on patrolling strategies that are robust with respect to various attackers trying to penetrate through the patrol path \cite{DBLP:journals/amai/ElmaliachAK09}. Theoretical works related to APP also include studies on {\em pursuit evasion} \cite{DBLP:journals/trob/VidalSKSS02} or {\em predator-prey} \cite{DBLP:conf/ijcai/HaynesS95} problems. The major difference between these works and the concept of APP is that we consider relatively higher number of agents and our agents are more limited in their abilities.

\subsection{The Target Allocation Problem}

It this paper we specifically focus on a sub-problem called {\em target allocation problem}.
The defenders are initially not assigned to any targets and don't have any information about the intended target of any attacker. However, the defenders have a full knowledge of all target locations in the protected area. The task in this setting is to allocate each defender agent to some location in the graph so that via its occupation defenders try to optimize a given objective function.

We assume that both teams use the same {\em cooperative path-finding} (CPF) algorithm for reaching temporarily selected targets. Generally, targets can be reassigned multiple times to defender agents in the course of area protection. However, it is assumed that target reassignment does not occur often. After assigning defender agents their target locations they will proceed to their targets via given CPF algorithm. If a target location is reached by a defender agent the agent stops there and continue in occupation of the target location until a new target is assigned to the agent. Attacker agents have their fixed targets in the protected area however they are free to select any temporary target which allows them to move freely in principle.

Our effort is to design a target allocation strategy for the defending team, so the success is measured from the defenders' perspective. The following objective functions can be pursued:

\begin{enumerate}
\item maximize the number of target locations that are not captured by the corresponding attacker
\item maximize the number of target locations that are not captured by the corresponding attacker within a given time limit
\item maximize the sum of distances between the attackers and their corresponding targets
\item minimize the time spent at captured targets
\end{enumerate}

\section{\uppercase{Preliminaries}}

In APP, we model the environment by an undirected unweighted graph $G=(V,E)$. In this work we restrict the instances to 4-connected grid graphs with possible obstacles. The team of attackers and defenders is denoted by $A=\{a_1,\dots, a_m\}$ and $D=\{d_1,\dots d_n\}$, respectively.
Continuous time is divided into discrete time steps. At each time step agents are placed in vertices of the graph so that at most one agent is placed in each vertex. Let $\alpha_t: A \cup D\rightarrow V$ be a uniquely invertible mapping denoting configuration of agents at time step $t$.
Agents can wait or move instantaneously into adjacent vertex between successive time steps to form the next configuration $\alpha_{t+1}$. Abiding by the following movement rules ensures preventing conflicts:
\begin{itemize}
\item An agent can move to an adjacent vertex only if the vertex is empty, or is being left at the same time step by another agent
\item A pair of agents cannot swap across a shared edge
\item No two agents enter the same adjacent vertex at the same time
\end{itemize}

We do not assume any specific order in which agents perform their conflict free actions at each time step. However, our experimental implementation moves all attacking agents prior to moving all defender agents at each time step.

The mapping $\delta^A: A\rightarrow V$ assigns a unique target to each attacker. The task in APP is to move defender agents so that area specified by $\delta^A$ is protected. This task can be equivalently specified as a search for strategy of target assignments for the defender team. That is, we are trying to find an injective mapping $\delta_{t}^D : D\rightarrow V$ which specifies where defender agents should proceed via given path-finding algorithm at time step $t$ as a response to previous attackers movements. The superscripts $A$ and $D$ is sometimes dropped when there is no danger of confusion. Let us note that target reassignment can be done at each time step which is equivalent to full control of movements of defender agents at each time step.

Formally, we state the APP as a decision problem and an optimization problem as follows:

\begin{definition} The decision APP problem:
Given an instance $\Sigma = (G, A, D, \alpha_0, \delta^A)$ of APP, is there a strategy of target allocations $\delta_{t}^D : D\rightarrow V$ such that the team $D$ of defenders is able to prevent agents from the team of attackers from reaching their targets by moving defending agents towards $\delta_{t}^D$.
\end{definition}

In many instances it is not possible to protect all targets. We are therefore also interested in the optimization variant of the APP problem:

\begin{definition} The optimization problem
Given an instance $\Sigma = (G, A, D, \alpha_0, \delta^A)$ of APP, the task is to find a strategy of target allocations $\delta_{t}^D : D\rightarrow V$ such that the team $D$ of defenders minimizes the number of attackers that reach their target by moving defending agents towards $\delta_{t}^D$.
\end{definition}

\section{\uppercase{Theoretical properties}}

APP is a computationally challenging problem as shown in the following analysis. In order to study theoretical complexity of APP we need to consider the decision variant of APP. Many game-like problems are PSPACE-hard, and APP is not an exception. We reduce the known problem of checking validity of Quantified Boolean Formula (QBF) to it.


The technique of reduction of QBF to APP is inspired by a similar reduction of QBF to ACPF from which we borrow several technical steps and lemmas \cite{IvanovaS14}. We describe the reduction from QBF using the following example. Consider a QBF formula in prenex normal form
\begin{multline}
$$
\varphi=\exists x\forall a \exists y\forall b\exists z\forall c \\
(b\vee c\vee x)\wedge(\neg a\vee\neg b\vee y)\wedge          \\
(a\vee \neg x\vee z)\wedge(\neg c\vee \neg y\vee \neg z)
$$
\end{multline}
This formula is reduced to an APP instance depicted in Fig. \ref{fig:proof}. Let $n$ and $m$ be the number of variables and clauses, respectively. The construction contains three types of gadgets.

For an \textbf{existentially} quantified variable $x$ we construct a diamond-shape gadget consisting of two parallel paths of length $m+2$ joining at its two endpoints. 

\begin{figure}[!h]
  \centering
   {\epsfig{file = 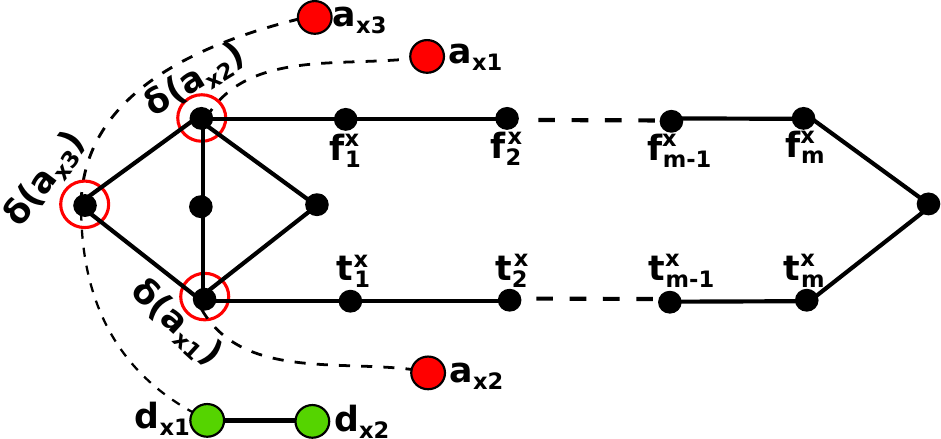, width = 7.0cm}}
  \caption{An existentially quantified variable gadget.  }
  \label{fig:gadget-exists}
 \end{figure}
 
There are 4 paths connected to the diamond at specific vertices as depicted in Fig. \ref{fig:gadget-exists}. The gadget further contains three attackers and two defenders with initial positions at the endpoints of the four joining paths. The vertices in red circles are targets of specified attackers. The only chance for defenders $d_{x1}$ and $d_{x2}$ to prevent attackers $a_{x3}$ and $a_{x1}$ from reaching their targets is to advance towards the diamond and occupy $\delta_A(a_{x3})$ by $d_{x2}$ and either $\delta_A(a_{x1})$ or $\delta_A(a_{x2})$ by $d_{x1}$. 

For every \textbf{universally} quantified variable $a$ there is a similar gadget with a defender $d_{a1}$ and an attacker $a_{a1}$ whose target $\delta^A(a_{a1})$ lies at the leftmost vertex of the diamond structure (see Fig. \ref{fig:gadget-forall}). The defender has to rush to the attacker's target and occupy it, because otherwise the target would be captured by the attacker. 
\begin{figure}[!h]
  \centering
   {\epsfig{file = 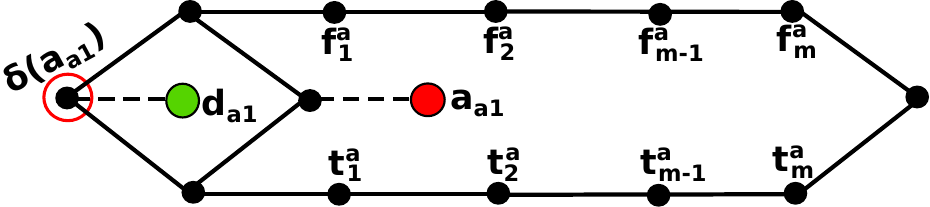, width = 7.0cm}}
  \caption{A universally quantified variable gadget.}
  \label{fig:gadget-forall}
 \end{figure}

Moreover, there is a gadget in two parts for each \textbf{clause} $C$, depicted in Fig. \ref{fig:clause}. It contains a simple path $p$ of length $\lfloor n/2 \rfloor+1$ with a defender $d_C$ placed at one endpoint. This length of $p$ is chosen in order to ensure a correct time of $d_C$'s entering to a variable gadget, so that gradual assignment of truth values is simulated. E. g. if a variable occurring in $C$ stands in the second $\forall\exists$ pair of variables in the prefix (the first and last pair is incomplete), then $p$ is connected to the corresponding variable gadget at its second vertex. The second part of the clause is a path of length $k$, with one endpoint occupied by attacker $a_C$ whose target $\delta^A(a_C)$ is located at the other endpoint. The length $k$ is selected in a way that the target $\delta^A(a_C)$ can be protected if the defender $d_C$ arrives there on time, which can happen only if it uses the shortest path to this target. If $d_C$ is delayed by even one step, the attacker $a_C$ can capture its target. These two parts of the clause gadget are connected through variable gadgets. 

\begin{figure}[!h]
  \centering
   {\epsfig{file = 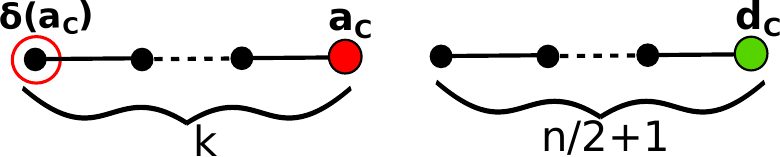, width = 5.0cm}}
  \caption{A clause gadget.}
  \label{fig:clause}
 \end{figure}

The connection by edges and paths between variable and clause gadgets is designed in a way that allows the agents to synchronously enter one of the paths of the relevant variable gadget. A gradual evaluation of variables according to their order in the prefix corresponds to the alternating movement of agents. A defender $d_C$ from clause $C$ moves along the path of its gadget, and every time it has the opportunity to enter some variable gadget, the corresponding variable is already evaluated. 

If there is a literal in $\varphi$ that occurs in multiple clauses, setting its value to true causes satisfaction of all the clauses containing it. This is indicated by a simultaneous entering of affected agents to the relevant path. Each clause defender $d_C$ has its own vertex in each gadget of a variable present in $C$, at which $d_C$ can enter the gadget. This allows a collision-free entering of multiple defenders into one path of the gadget.

\begin{figure}[!h]
  \centering
   {\epsfig{file = 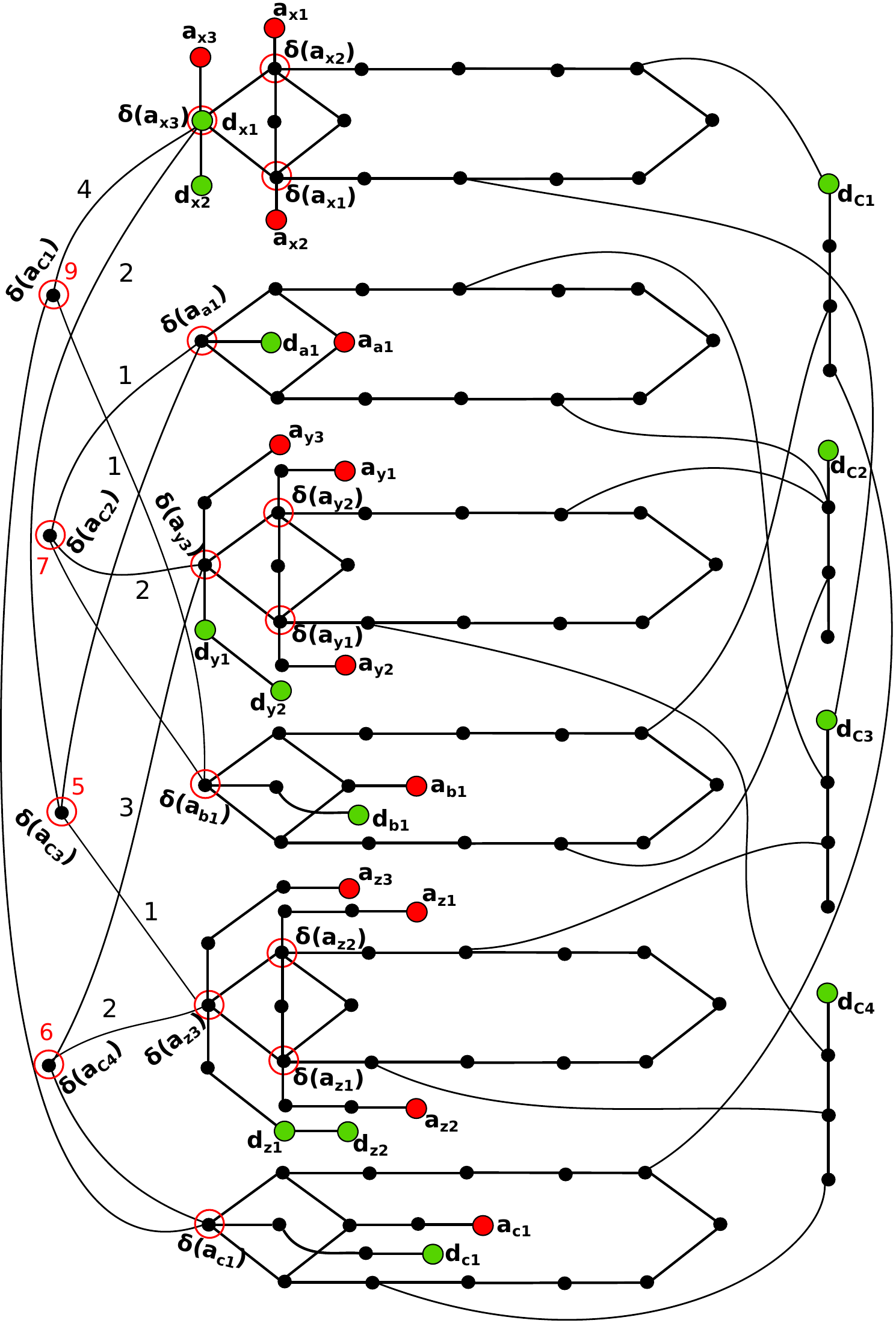, width = 8.0cm}}
  \caption{A reduction from TQBF to APP. Black points represent unoccupied vertices. If two points are connected by a line without any label, it means there is an edge between them. A line with a label $k$ indicates that the two points are connected by a path of $k$ internal vertices. Initial positions of attackers and defenders are represented by red and green nodes, respectively. A red circle around a node means that the node is a target of some attacker. In order to simplify the drawing, we do not fully display the second part of the clause gadget. Instead, there is a red number near the target of a clause gadget that indicates the distance of the attacker aiming to that target. A vertex with an agent is labeled by the agent's name. Labels of targets specify the associated agents. } 
  \label{fig:proof}
 \end{figure}

\begin{theorem}
The decision problem whether there exists a winning strategy for the team of defenders, i.e. whether it is possible to prevent all attackers from reaching their targets, in a given APP instance is PSPACE-hard.
\end{theorem}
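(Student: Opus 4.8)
The plan is to reduce from the validity problem for quantified Boolean formulas (TQBF), which is PSPACE-complete, to the decision APP problem; since the reduction is computable in polynomial time, PSPACE-hardness of APP follows immediately. Given a prenex formula $\varphi = Q_1 x_1 \cdots Q_n x_n\, \psi$ with $\psi$ in CNF, the construction assembles the three gadgets already introduced — one existential gadget per $\exists$-variable, one universal gadget per $\forall$-variable, and one two-part clause gadget per clause — wired together as in Fig.~\ref{fig:proof}. Each gadget uses only $O(m)$ or $O(n)$ vertices and each variable hosts one entry vertex per clause in which it occurs, so the instance $\Sigma(\varphi)$ has size polynomial in $n$ and $m$ and the map $\varphi \mapsto \Sigma(\varphi)$ is a genuine polynomial-time reduction.

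The correctness statement I would establish is that $\varphi$ is valid if and only if the defender team has a winning strategy in $\Sigma(\varphi)$, and the bridge between the two is a state-encoding argument. First I would verify the behaviour of each gadget in isolation. In an existential gadget, $d_{x1}$ and $d_{x2}$ can save both endangered targets only by committing to one of the two parallel diamond paths; this free binary choice is made by the defending team and is identified with setting $x$ true or false, mirroring the $\exists$-player. In a universal gadget, the single defender $d_{a1}$ is forced to rush to $\delta^A(a_{a1})$ and has no slack, so it is the attacker's free movement that decides which parallel path remains open; the value of a $\forall$-variable is thus dictated by the adversary and cannot be controlled by the defenders, mirroring the $\forall$-player. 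I would prove these two gadget lemmas by the same pebble-exchange and timing arguments borrowed from the ACPF reduction of~\cite{IvanovaS14}.

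The second and, I expect, hardest ingredient is a synchronisation lemma. The lengths of the connecting paths — the clause path of length $\lfloor n/2\rfloor+1$, the offset of the entry vertex of $d_C$ into the gadget of the $i$-th quantified variable, and the diamond length $m+2$ — must be chosen so that when a clause defender $d_C$ reaches the gadget of variable $x_i$, that variable has already been evaluated and no later one has been. This is exactly what forces the variables to be fixed in prefix order and therefore encodes the quantifier \emph{alternation} rather than a flat one-shot assignment, the feature separating PSPACE-hardness from mere NP-hardness. I would then show that $d_C$ can traverse the gadget of $x_i$ within its deadline precisely when $x_i$ is set so as to satisfy the corresponding literal of $C$, so that $d_C$ reaches $\delta^A(a_C)$ on time — protecting the clause target — if and only if $C$ is satisfied by the current assignment; an off-by-one delay lets $a_C$ capture its target. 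Literals occurring in several clauses are handled by the per-clause entry vertices, which let multiple clause defenders enter the same satisfying path without collision.

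Finally I would assemble the two directions. For the forward direction, given a winning strategy for the $\exists$-player in $\varphi$, the defenders replay it in the existential gadgets, and by the synchronisation lemma every clause defender arrives on time against every attacker continuation, so all targets are protected and the defenders win. For the converse, from any winning defender strategy I would read off, gadget by gadget in prefix order, an $\exists$-player strategy: the existential-gadget commitments supply the $\exists$-choices, the attacker continuations range over all $\forall$-choices, and the fact that every clause defender succeeds witnesses that $\psi$ holds under each resulting assignment, i.e.\ that $\varphi$ is valid. The delicate points throughout are the exact path-length bookkeeping in the synchronisation lemma and checking that the interconnections introduce no unintended shortcuts or movement conflicts; these I would discharge by reusing the conflict-freeness lemmas of the ACPF construction.
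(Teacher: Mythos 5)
Your proposal follows essentially the same route as the paper's own proof: a polynomial-time reduction from TQBF using the identical existential, universal, and clause gadgets, with quantifier alternation enforced by the path-length synchronisation and correctness argued in both directions via the defenders' path choices encoding the assignment. Your write-up is, if anything, more explicit than the paper about the synchronisation lemma and the reuse of the ACPF conflict-freeness arguments, but the underlying argument is the same.
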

\begin{proof}
Suppose $\varphi$ to be valid. To better understand validity of $\varphi$ we can intuitively ensure that variables are assigned gradually according to their order in the prefix. For every choice of value of the next $\forall$-variable there exists a choice of value for the corresponding $\exists$-variable so that eventually the last assignment finishes a satisfying valuation of $\varphi$. The strategy of assigning $\exists$ variables can be mapped to a winning strategy for defenders in the APP instance constructed from $\varphi$. Every satisfying valuation guides the defenders towards vertices resulting in a position where all targets are defended.  Every time  a  variable  is valuated, another agent in the constructed APP instance is ready to enter the upper path, if the variable is evaluated as true, or the lower path, otherwise. When the evaluated variable $x$ is existentially quantified, the defender $d_{x1}$ enters the upper or lower path. In case of universally quantified variable $a$, the entering agent is the attacker $a_{a1}$.  Since the valuation satisfies $\varphi$, every clause $C_j$ has at least one variable $q$ causing the satisfaction of $C_j$. That is modeled by the situation where defenders $d_{q1}$ and $d_{Cj}$ meet each other in one of the diamond's paths, which enables either the defender $d_{q2}$ (in case $q$ is existentially quantified) or $d_{q1}$ (in case $q$ is universally quantified) to advance towards the target $\delta^A(d_{C1})$. The situation for an existentially quantified variable is explained by Fig. \ref{fig:gadget-exists-steps}.

Whenever there exists a winning strategy for the constructed APP instance, the defenders must arrive in all targets on time. This is possible only if variable agents and clause agents meet on one of the paths in a diamond gadget, and only if all defenders use the shortest possible paths. The variable agents' selection of upper or lower paths determines the evaluation of corresponding variables. An advancement of variable and clause defenders that leads to meeting of the defenders at adjacent vertices, and a subsequent protection of targets indicates that the corresponding variable causes satisfaction of the clause. 
\end{proof}

\begin{figure}[!h]
  \centering
   {\epsfig{file = 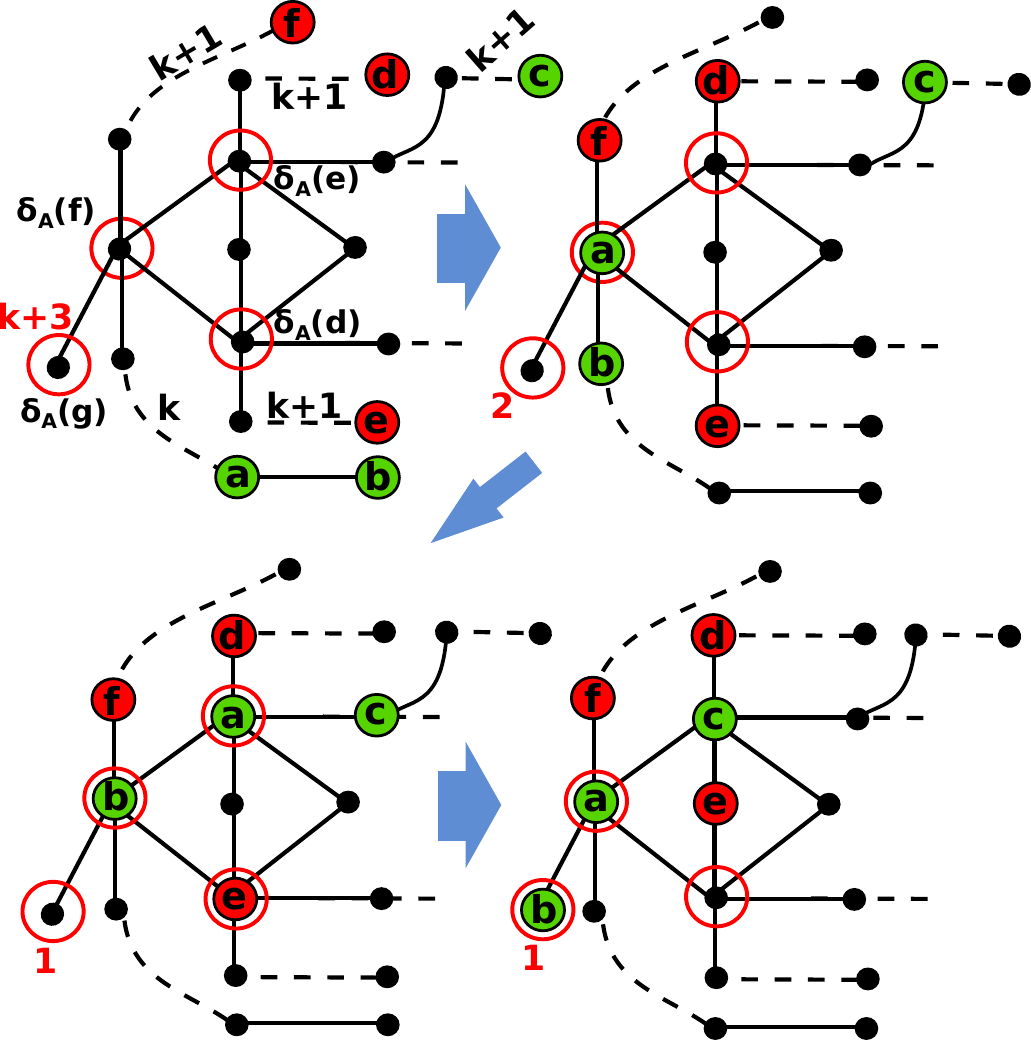, width = 7.0cm}}
  \caption{An example of agents' advancement at an existential variable clause. It is defenders' turn in each of the four figures. The top left figure shows the initial position of the agents. The value $k$ depends on the order of the corresponding variable in the prefix. When the agents reach the positions in the second figure, the corresponding variable is about to be evaluated which is analogous to defender $a$ entering one of the paths, which prevents the attackers $d$ and $e$ from exchanging their position and reaching the targets. If $a$ moves to the upper path, as happens in the third figure, the agent $c$ from a clause gadget has the opportunity to enter the upper path where the two defenders meet. Attacker $e$ can enter the target $\delta^A(d)$, which is nevertheless not its intended goal. Finally, the defenders can protect all targets by a train-like movement resulting in the position in the last figure. Also note the gradual approaching of the undisplayed attacker to the target represented by the red number.      }
  \label{fig:gadget-exists-steps}
 \end{figure}
\section{\uppercase{Target allocation}}

Solving APP in practice is a challenging problem due to its high computational complexity. As already mentioned in the section \ref{sec:introduction}, solving approaches can be divided into two basic categories: \emph{single-stage} and \emph{multi-stage}. In single stage methods, targets are assigned to defenders only once at the beginning, as opposed to multi-stage methods, where the targets can be reassigned any time during the agents' course. Once all defenders are allocated to some targets, they try to get to the desired locations using the LRA* algorithm modified for the environment with adversarial team. This section focuses merely on the single-stage methods.

We describe several simple target allocation strategies and discuss their properties. The first two methods always allocate one defender to one target. The advantage of this approach is that if a defender manages to captures a target, it will never be taken by the attacker. This can be useful in scenarios where the number of defenders is similar to the number of attackers. Unfortunately, such a  strategy would not be very successful in instances where attackers significantly outnumber defenders.

\subsection{Random Allocation}

For the sake of comparison, we consider a strategy, where each defender is allocated to a random target of an attacker. Neither the agent location nor the underlying grid graph structure is exploited.

\subsection{Greedy Allocation}

A greedy strategy is slightly improved approach. The basic variant referred to as \emph{greedy} takes the defenders one by one and allocates it to the closest target. Another variant called \emph{strict greedy} starts with a calculation of distances between every defender and every target, and stores these values in an appropriate data structure. Subsequently, we iteratively select the pair (defender, target) with the shortest distance, and all entries containing the selected attacker and target are removed from the data structure. This is repeated until there are no available defenders or no unassigned targets left.

\subsection{Bottleneck Simulation Allocation}
Simple target allocation strategies do not exploit the structure of underlying graph in any way. Hence natural next step is to occupy by defenders those vertices that would divert attackers from the protected area as much as possible with the help of graph structure. The aim is to successfully defend the targets even with small number of defenders. As our domain are 4-connected grids with obstacles we can take advantage of the obstacles already occurring in the grid and use them as addition to vertices occupied by defenders. Figure \ref{fig:bottleneck} illustrates a grid where the defenders could easily protect the target area even though they are outnumbered by the attackers. Intuitively as seen in the example, hard to overcome obstacle for attacking team would arise if a bottleneck on expected trajectories of attackers is blocked. 
\begin{figure}[!h]
  \centering
   {\epsfig{file = 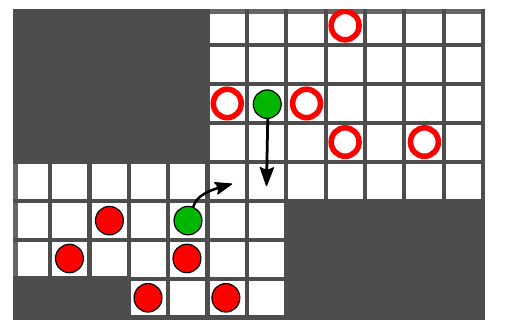, width = 4.0cm}}
  \caption{An example of bottleneck blocking}
  \label{fig:bottleneck}
 \end{figure}
We aim to identify strategic bottlenecks and block them by defenders. The first naive idea assumes that a bottleneck is a gap between two obstacles laying opposite to each other, and it is possible to identify them by an excessive search of the map. It is easy to see that this approach suffers of insufficient robustness. Not only it assumes a map with orthogonal obstacles similar to the one in Fig. \ref{fig:orthogonal_rooms} but also it does not consider whether the bottleneck is actually passed by any attackers. Fig. \ref{fig:bottleneck-compare} shows an example of a bottleneck identifiable by the excessive search (left), and a bottleneck that cannot be discovered by this method (right).

We suggest the following strategy exploiting bottlenecks in the underlying grid.
 \begin{figure}[!h]
    \centering
    \begin{subfigure}[b]{0.13\textwidth}
        \includegraphics[width=\textwidth]{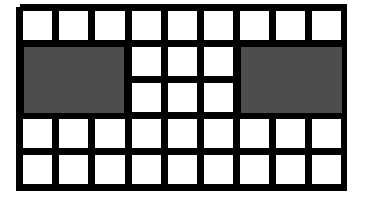}
        \caption{}
        \label{fig:gull}
    \end{subfigure}
    \begin{subfigure}[b]{0.13\textwidth}
        \includegraphics[width=\textwidth]{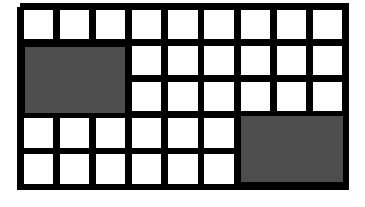}
        \caption{}
        \label{fig:tiger}
    \end{subfigure}
    \vspace{2mm}
    \caption{Examples of bottlenecks}\label{fig:bottleneck-compare}
\end{figure}

In order to discover bottlenecks of general shape, we develop the following simulation strategy. The basic idea is that as attackers move towards the targets, they enter vertices close to a bottleneck more often than other vertices. This observation suggests to simulate the movement of the attackers and find frequently visited vertices. As defenders do not share the knowledge about paths being followed by attackers, frequently visited vertices are determined by a simulation in which paths of attackers are estimated. 

There can be several vertices with the highest frequency of visits, so the final vertex is selected by another criterion. The closer a vertex is to the defenders, the better chance the defenders have to capture it before the attackers pass through it, so in our implementation, we use the distance from an approximate location of defenders in order to select one vertex of maximum frequency.

After obtaining such a frequently visited vertex, we then search its vicinity. If we find out that there is indeed a bottleneck, its vertices are assigned to some defenders as their new targets. Under the assumption that the bottleneck is blocked by defenders, the paths of attackers may substantially change. For that reason we estimate the paths again and find the next frequent vertex of which vicinity is explored. The whole process is repeated until all available defenders are allocated to a target, or until no more bottlenecks are found. The high-level description of this procedure is expressed by Alg. \ref{alg:bottleneck}
\begin{algorithm}
 \KwData{$G=(V,E)$, $D$, $A$}
 \KwResult{Target allocation $\delta^D$}
 $T_{available} = \{\delta (a): a\in A\}$\;
 $D_{available} = D$\;
 $F = \emptyset$\;

 $\delta'_A=$ Random guess of $\delta_A$\;
 \While {$D_{available}\neq \emptyset$}{
  \For{$a\in A$} {
  $p_a= \text{shortestPath}(\alpha_0(a), \delta'_A(a), G, F)$\;
}

  $f(v) = |\{p_a:a\in A \wedge v\in p_a\}|$\;
  $w\in \arg\max_{v\in V} f(v)$\;
  $B=$searchVicinity$(w)$\;
  \eIf{$B\neq \emptyset$} {
  	$D'\subseteq D_{available}, |D'|=|B|$\;
    \text{assignToDefenders}(B, D')\;
    $D_{available}=D_{available}\setminus D'$\;
  	$F = F\cup B$
  }
  {
   \textbf{break} \;
  }
 }
 $\text{assignToDefenders}(T_{available}, D_{available})$\;
 \caption{Bottleneck simulation procedure}
\label{alg:bottleneck}
\end{algorithm}
The input of the algorithm is the graph $G$ and sets $D$ and $A$ of defenders and attackers, respectively. During the initialization phase, we create the set $D_{available}$ of defenders that are not yet allocated to any target. Next, we create the set $F$  of so called forbidden nodes. The following step takes attackers one by one and every time makes a random guess which target is an agent aiming for, resulting in the mapping $\delta'$. The algorithm then iterates while there are available defenders. In each iteration, we construct a shortest path from each agent $a$ between its initial position $\alpha_0(a)$ and its estimated target location $\delta'(a)$. A vertex $w$ from among the vertices contained in the highest number of paths is then selected, and its surroundings is searched for bottlenecks. If a bottleneck is found, the set of vertices $B$ is determined in order to block the bottleneck. The set $D'$ contains a sufficient number of defenders that are allocated to the vertices in $B$. Agents from $D'$ are no longer available, and also vertices from $B$ are now forbidden, so the paths in the following iterations will avoid them. If no bottleneck is found, it is likely that the agents have a lot of freedom for movement and blocking bottlenecks is not a suitable strategy for such an instance. The loop is left and the remaining available agents are assigned to random targets from $T_{available}$. 

The search of the close vicinity of a frequently used vertex $w$ is carried out by an expanding square centered at $w$. We start with distance 1 from $w$ and gradually increase this value\footnote{From the grid perspective, two locations are considered to be in distance 1 from each other if they share at least one point. Hence, a location that does not lie on the edge of the map has 8 neighbors.} up to a certain limit. In every iteration we identify the obstacles in the fringe of the square and keep them together with obstacles discovered in previous iterations. Then we check whether the set of obstacles discovered so far forms more than one connected components. If that is the case, it is likely that we encountered a bottleneck. We then find the shortest path between one connected component of obstacles and the remaining components. This shortest path is believed to be a bottleneck in the map, and its vertices are assigned to the available defenders as their new targets.

In order to discover subsequent bottlenecks in the map, we assume that the previously found bottlenecks are no longer passable. They are marked as forbidden and in the next iteration, the estimated paths will not pass through them. The procedure findShortestPaths returns the shortest path between given source and target, that does not contain any vertices from the set $F$ of forbidden locations. 

In this basic form, the algorithm is prone to finding "false" bottlenecks in instances with an indented map that contains for example blind alleys. It is possible to avoid undesired assigning vertices of false bottlenecks to defenders by running another simulation which excludes these vertices. If the updated paths are unchanged from the previously found ones, it means that blocking of the presumed bottleneck does not affect the attackers movement towards the targets, and so there is no reason to block such a bottleneck.

\begin{figure}
\label{fig:false_bottleneck}
\end{figure}

\section{\uppercase{Experimental evaluation }}
Experimental evaluation is focused on competitive comparison of suggested target allocation strategies with respect to the objective 2. - maximization of the number of locations not captured by attackers within a given time limit.

Our hypothesis is that random strategy would perform as worst since it is completely uninformed. Better results are expected from shortest path and greedy strategy but all these simple strategies are expected to be outperformed by advanced bottleneck strategies.

We implemented all suggested strategies in Java as an experimental prototype. In our testing scenarios we use maps of different structure with various initial configurations of attackers and defenders. Our choice of testing scenarios is focused on comparing different performance of various strategies and discovering what factors have the most significant impact on their success.

As the following sections show, different strategies are successful in different types of instances. It is therefore important to design the instances with a sufficient diversity, in order to capture strengths and weaknesses of individual strategies. 

\subsection{Instance generation and types}

The instances used in the practical experiments are generated using a pseudo random generator, but in a controlled manner. An instance is defined by its map, the ratio $|A|:|D|$ and locations of individual defenders, attackers and their targets. These three entries form an input of the instance generation procedure. Further, we select rectangular areas inside which agents of both teams and the attackers' targets are placed randomly. We use 5 different maps, where the first and most basic map is empty, i. e. does not contain any obstacles. Every next type has a more complicated obstacle structure.
 \begin{figure}[!h]
    \centering
    \begin{subfigure}[b]{0.23\textwidth}
    \centering
        \includegraphics[height = 2.7cm]{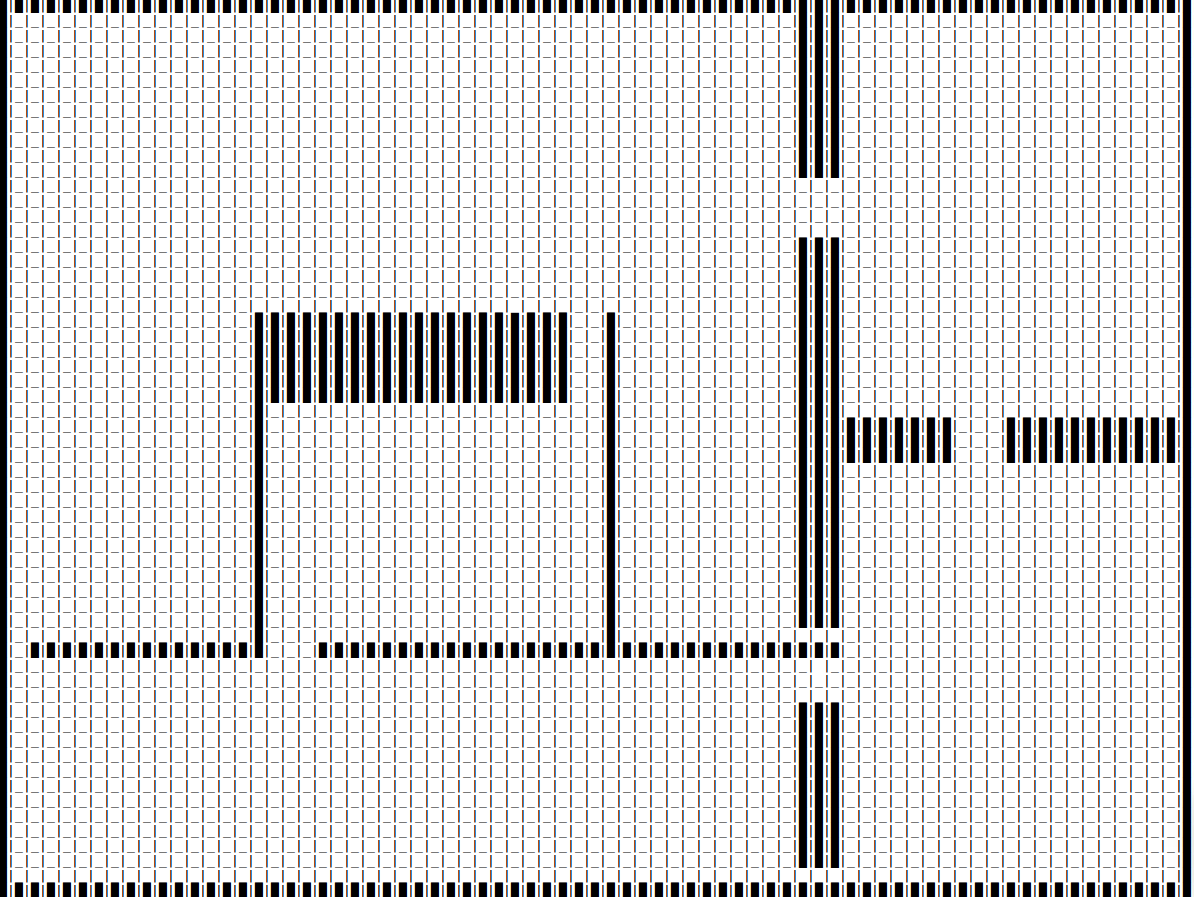}
        \caption{Orthogonal rooms}
        \label{fig:orthogonal_rooms}
    \end{subfigure}
    \begin{subfigure}[b]{0.23\textwidth}
    \centering
        \includegraphics[width = 2.7cm, angle =90]{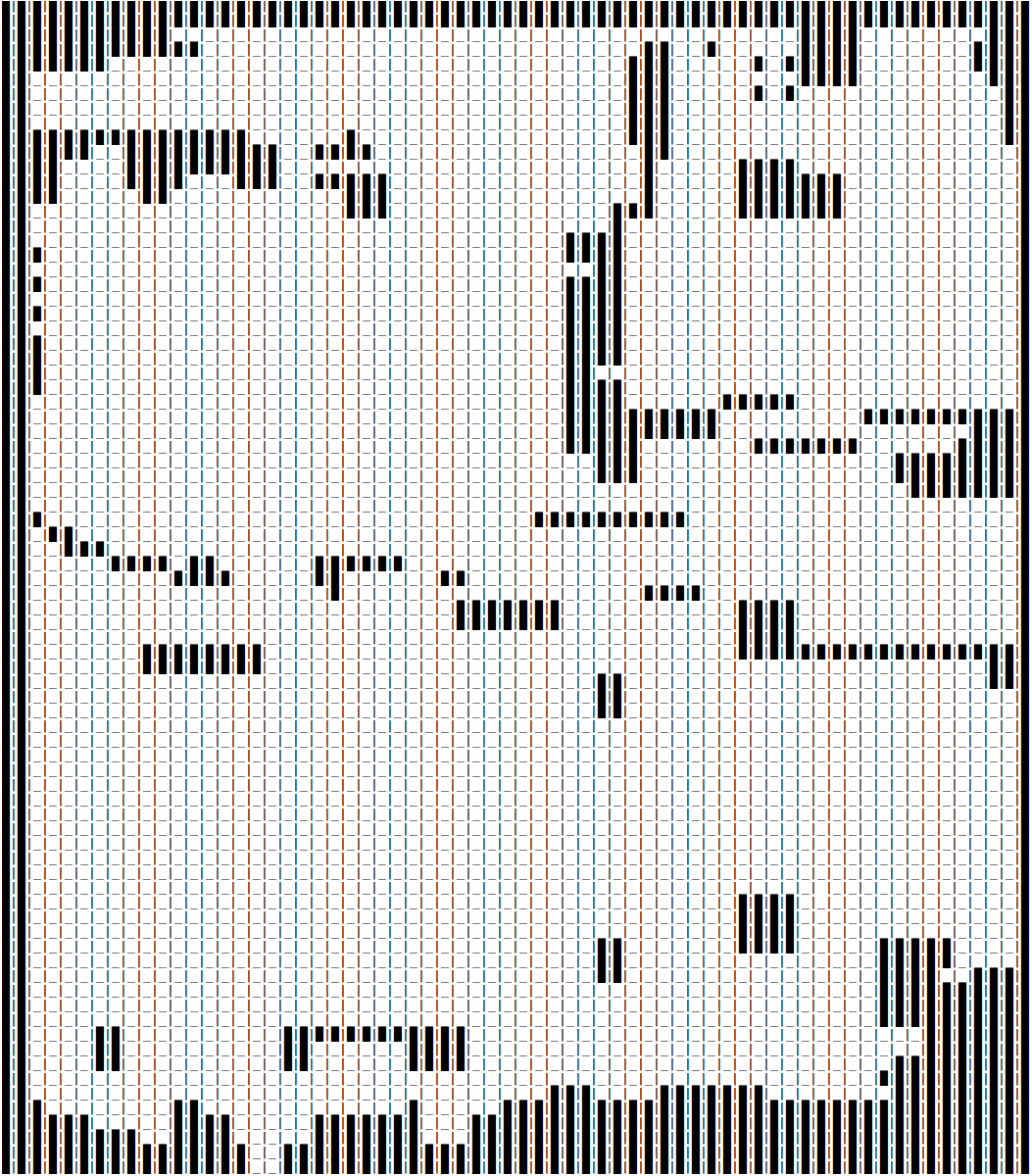}
        \caption{Ruins}
        \label{fig:ruins}
    \end{subfigure}
    \vspace{3mm}
    
    \begin{subfigure}[b]{0.23\textwidth}
    \centering
        \includegraphics[height = 2.60cm]{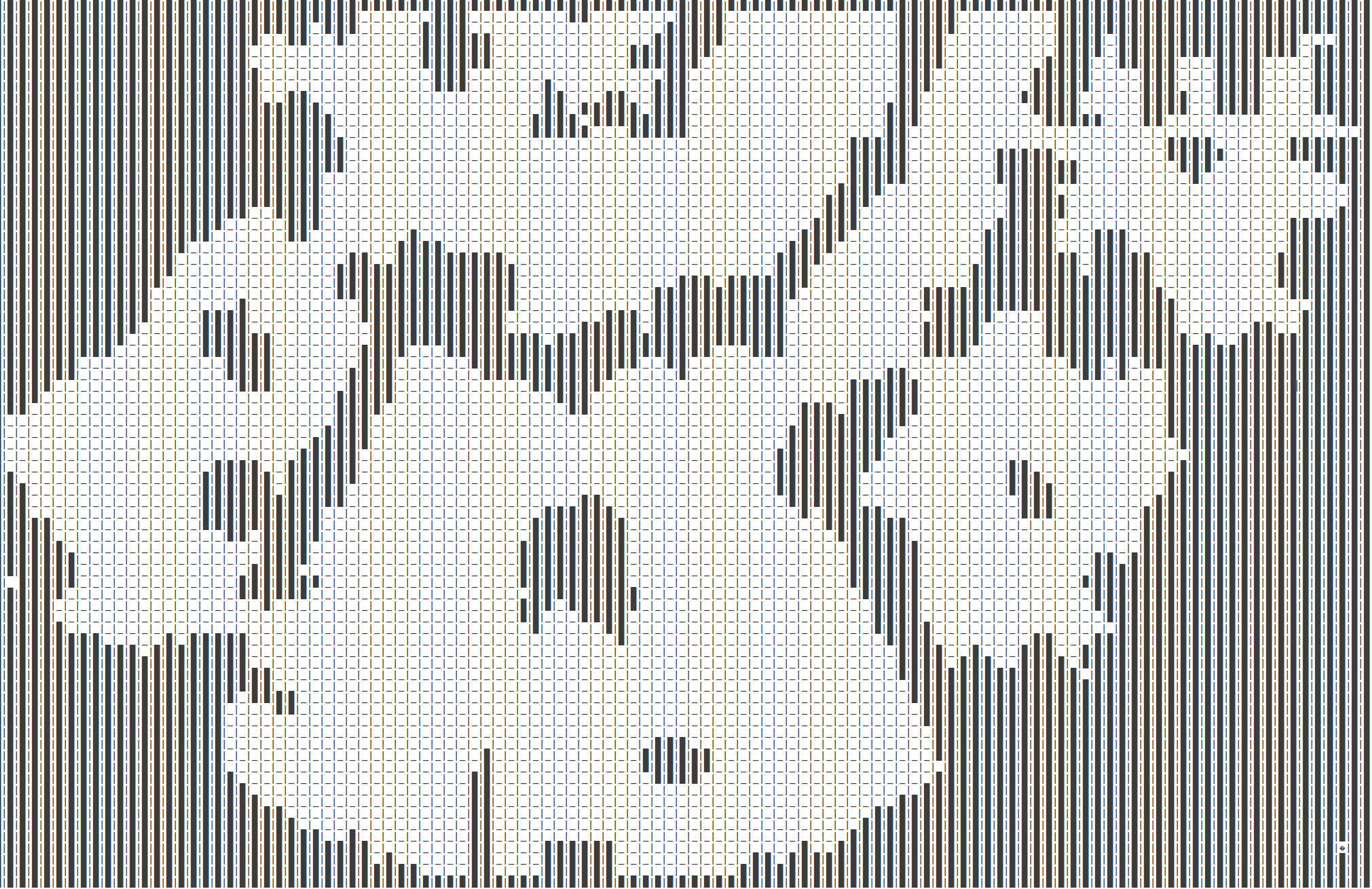}
        \caption{Waterfront}
        \label{fig:waterfront}
    \end{subfigure}
    \begin{subfigure}[b]{0.23\textwidth}
    \centering
        \includegraphics[height = 2.60cm]{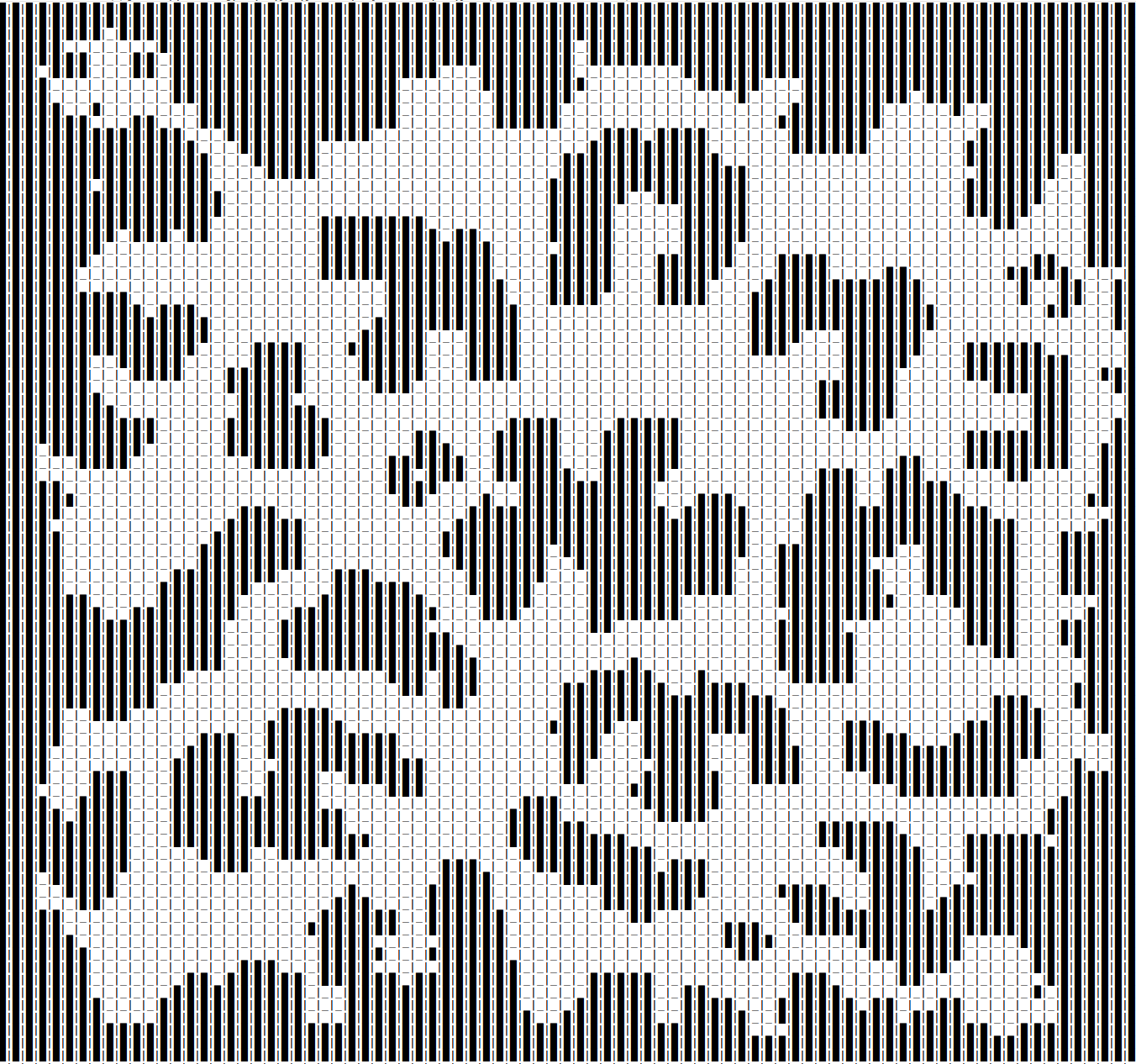}
        \caption{Dark forest}
        \label{fig:dark_forest}
    \end{subfigure}
        \vspace{3mm}
    \caption{Maps}\label{fig:maps}
\end{figure}
\paragraph{Orthogonal rooms.} The first non-empty map depicted in Fig. \ref{fig:orthogonal_rooms} resembles a top view of a house. Doors between the rooms are bottlenecks that often have to be passed by a large number of agents. This map is designed so that any two obstacle pieces that form an obstacle lie on a straight line, which makes them easily detectable. 

\paragraph{Ruins.} This map also resembles a top view of a house, but with damaged walls. Unlike Orthogonal rooms, this map also contains bottlenecks of which walls are dislocated and are therefore harder to recognize, as apparent from the following Fig. \ref{fig:ruins}.

\paragraph{Waterfront.} The next map is even more cluttered. It contains completely irregular obstacles that form disorganized islands connected by bridges, as seen in Fig. \ref{fig:waterfront}. This map is taken from \cite{movingAI}. 
 
\paragraph{Dark forest.} The last and most jumbled map depicted in Fig. \ref{fig:dark_forest} was used in WarCraft and is also taken from \cite{movingAI}. It contains irregular obstacles representing inaccessible forest cover.

In the main set of experiments, each map is populated with agents of 3 different $|D|:|A|$ ratios, namely $1:1$, $1:2$ and and $1:10$, with fixed number of attackers $|A|=100$. Each of these scenarios are further divided into two types reflecting a relative positions of attackers and defenders. The type \emph{overlap} assumes that the rectangular areas for both attackers and defenders have an identical location on the map. On the other hand, the teams in the type \emph{separated} have completely distinct initial areas. Finally, the maximum number of moves of the agents is set to 150 for each team.

Note that the individual instances are never completely fair to both teams. It is therefore impossible to make a conclusion about a success rate of a strategy by comparing its performance on different maps. The comparison should always be made by inspecting the performance in one type of instance, where we can see the relative strength of the studied algorithms.

\subsection{Experiments on simple maps}
The experiments comparing random, shortest path and greedy strategy confirm the expected outcome that the random strategy is always worse then the other two methods. This set of experiments is conducted on instances with an empty map containing 100 agents in each team. Methods that exploit the structure of obstacles in the map are not included because they are not relevant in this settings.

The Fig. \ref{fig:empty-together} depicts scenarios where the rectangular areas of placement of the teams are adjacent to each other and lie on the left side of the empty map. The area that contains targets is initially located so that it partially overlaps both teams areas. It is then gradually dragged to the right end of the map. The distance between the agent areas and the target is represented by the $x$-axis. Each entry in the graph is an average value from 10 runs with different random location of agents and targets in their corresponding areas.
\begin{figure}[!h]
  \centering
   {\epsfig{file = 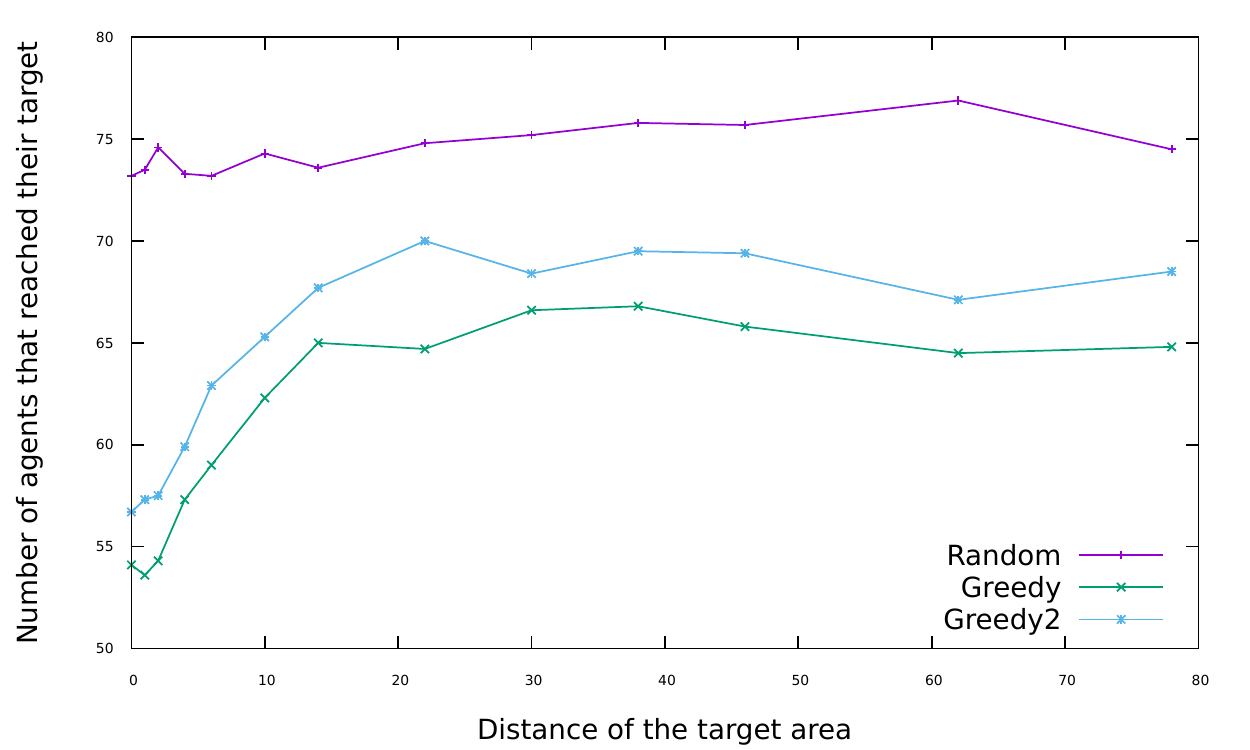, width = 6.0cm}}
  \caption{Empty map, simple strategies, teams together }
  \label{fig:empty-together}
 \end{figure}
A similar experiments with results shown in Fig \ref{fig:empty-opposite} were run an empty map with the targets located in the center, and the both teams were moving away in opposite directions.
\begin{figure}[!h]
  \centering
   {\epsfig{file = 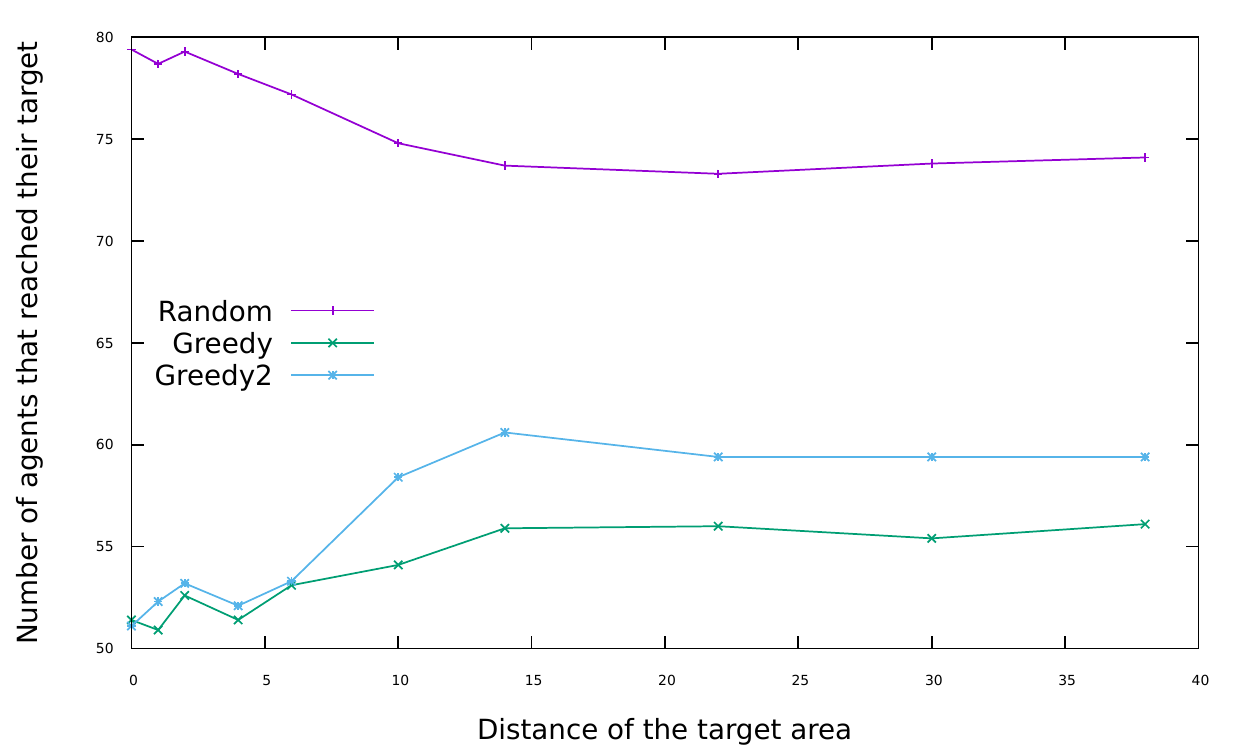, width = 6.0cm}}
  \caption{Empty map, simple strategies, teams opposite }
  \label{fig:empty-opposite}
 \end{figure}
Possibly a less intuitive finding is that the greedy strategy almost always outperforms the strict greedy strategy. By inspecting individual simulation courses, a possible explanation to this behavior is that the strictly greedy strategy successfully captures the closest targets with the closest defenders, but impede defenders in the next frontlines, because they have to bypass them. This delay gives more chances to attackers to reach their targets. 
\begin{figure}[!h]
  \centering
   {\epsfig{file = 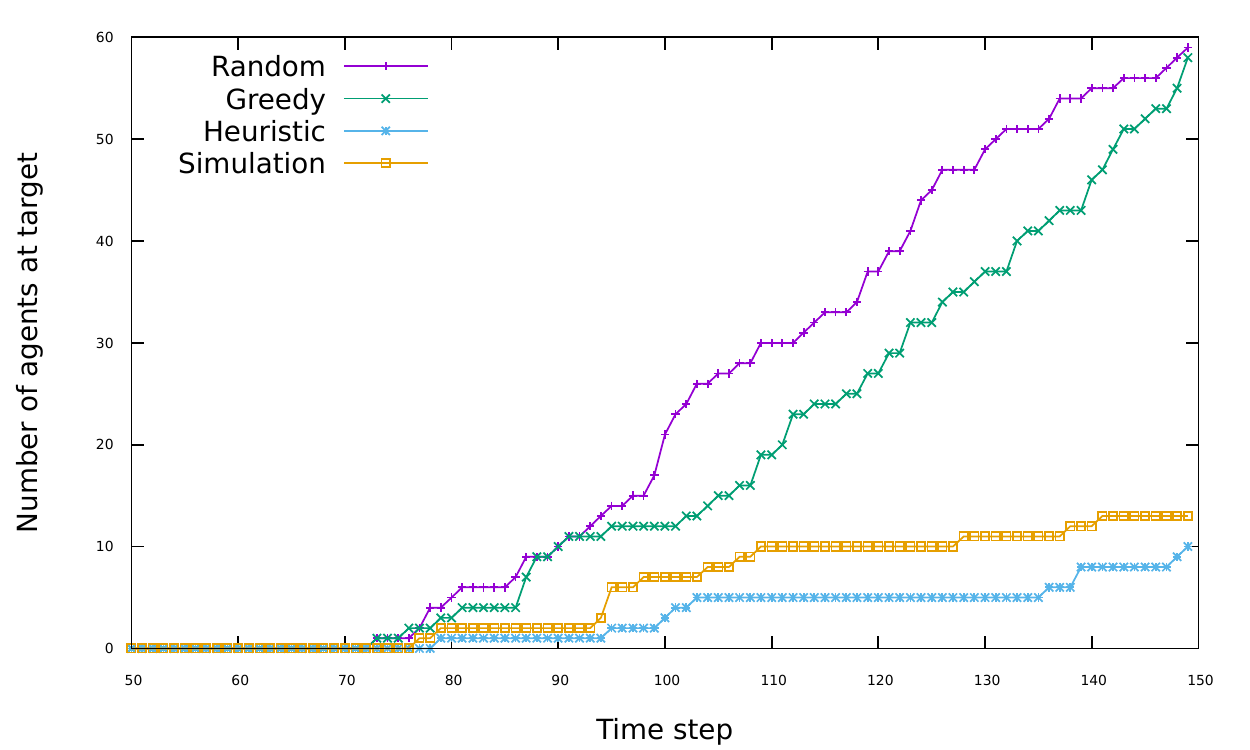, width = 6.0cm}}
  \caption{A single instance on the map Orthogonal rooms.  }
  \label{fig:single_inst}
 \end{figure}
The following set of experiments compares random, greedy, and simulation strategy in different instance settings.  An example of one run, where we compare 4 different strategies is demonstrated in Fig. \ref{fig:single_inst}. We can see how many attackers gradually reach their targets when different target allocation strategies are used in a single instance. The map Orthogonal rooms with overlapping teams was used in this particular example. 

Each entry in Tab. \ref{tab:orthogonal-rooms} is an average number of attackers that reached their targets at the end of the time limit. The average value is calculated for 10 runs in each settings, always with a different random seed. Random and greedy strategies have very similar results in all positions and team ratios. It is apparent and not surprising that with decreasing $|D|:|A|$ ratio, the strength of these strategies decreases. The simulation strategy gives substantially better results in all settings. Also note that in case of overlapping teams, the simulation strategy scores similarly in all $|D|:|A|$ ratios.
 \begin{table}[h]
	\caption{Average number of agents that eventually reached their target in the map Orthogonal rooms}\label{tab:orthogonal-rooms} \centering
\begin{tabular}{llrrr}
   Team\\ position&$|D|:|A|$& RND & GRD& SIM \\
	\hline
  	\multirow{ 3}{*}{Overlapped} & 1:1 & 40.4 & 49.2 & 21.0 \\
  							  & 1:2 & 56.7 & 56.5 & 20.8 \\
  							  & 1:10 & 67.8 & 64.7 & 24.7 \\ \hline
   \multirow{ 3}{*}{Separated} & 1:1 & 39.0 & 40.7 & 10.3 \\
  							  & 1:2 & 57.7 & 50.1 & 13.3 \\
   							  & 1:10 & 78.5 & 69.9 & 30.2 \\
  \hline
\end{tabular}
\end{table}
Tab. \ref{tab:ruins} contains results of an analogous experiment conducted on the map Ruins. The random strategy performs well with in instances with hight number of attackers. The dominance of the simulation strategy is apparent here as well.
 \begin{table}[h]
	\caption{Average number of agents that eventually reached their target in the map Ruins. }\label{tab:ruins} \centering
\begin{tabular}{llrrr}
   Team\\ position&$|D|:|A|$& RND & GRD & SIM \\
	\hline
  	\multirow{ 3}{*}{Overlapped} & 1:1 & 36.8 & 49.4 & 17.7 \\
  							  & 1:2 & 80.0 & 63.5 & 33.0 \\
  							  & 1:10 & 92.5 & 88.9 & 58.2 \\ \hline
   \multirow{ 3}{*}{Separated} & 1:1 & 9.5 & 33.6 & 11.8 \\
  							  & 1:2 & 47.6 & 34.4 & 11.8 \\
   							  & 1:10 & 85.6 & 85.9 &14.7 \\
  \hline
\end{tabular}
\end{table}
\subsection{Experiments on complex maps}
Maps Waterfront and Dark forest contain very irregular obstacles and many bottlenecks, and are therefore very challenging environments for all strategies. In the Dark forest map, random and greedy methods are more suitable than the simulation strategy in instances with equal team sizes, as oppose to the scenarios with lower number of defenders, where the bottleneck simulation strategy clearly leads. In the separated scenario, the simulation strategy is even worse in all tested ratios (see Tab. \ref{tab:waterfront} and Tab. \ref{tab:darkforest}). This behavior can be explained by the fact that occupying all relevant bottlenecks in such a complex map is harder than occupying targets in the protected area. In contrast, bottlenecks in the Waterfront map have more favourable structure, so that those relevant for the area protection can be occupied more easily. 
 \begin{table}[h]
	\caption{Average number of agents that eventually reached their target in the map Waterfront}\label{tab:waterfront} \centering
\begin{tabular}{llrrr}
   Team\\ position&$|D|:|A|$& RND & GRD & SIM \\
	\hline
  	\multirow{ 3}{*}{Overlapped} & 1:1 & 32.0 & 41.7 & 37.1 \\
  							  & 1:2 & 60.6 & 63.8 & 39.8 \\
  							  & 1:10 & 77.8 & 72.9 & 51.7 \\ \hline
   \multirow{ 3}{*}{Separated} & 1:1 & 15.8 & 19.3 & 10.7 \\
  							  & 1:2 & 46.4 & 37.6 &  9.8 \\
   							  & 1:10 & 75.3 & 65.5 & 14.9 \\
  \hline
\end{tabular}
\end{table}
 \begin{table}[h]
	\caption{Average number of agents that eventually reached their target in the map Dark forest}\label{tab:darkforest} \centering
\begin{tabular}{llrrr}
   Team\\ position&$|D|:|A|$& RND & GRD & SIM \\
	\hline
   \multirow{ 3}{*}{Overlapped} & 1:1 & 21.6 & 37.9 & 48.8 \\
  							  & 1:2 & 53.7 & 42.6& 37.8 \\
   							  & 1:10 & 60.9 & 51.9& 38.4 \\
  \hline
\multirow{ 3}{*}{Separated} & 1:1 & 35.3 & 35.9 & 61.5 \\
  							  & 1:2 & 40.6 & 41.3 & 59.6 \\
  							  & 1:10 & 65.1 & 67.0 & 66.0 \\ 
    \hline

\end{tabular}
\end{table}
\section*{\uppercase{Concluding Remarks}}
We have shown the lower bound for computational complexity of the APP problem, namely that it is PSPACE-hard. Theoretical study of ACPF \cite{IvanovaS14} showing its membership in EXPTIME suggests that the same upper bound holds for APP but it is still an open question if APP is in PSPACE. In addition to complexity study we designed several practical algorithms for APP under the assumption of single-stage vertex allocation. Performed experimental evaluation indicates that our {\em bottleneck simulation} algorithm is efficient even in case when defenders are outnumbered by attacking agents. Surprisingly, our simple {\em random} and {\em greedy} algorithms turned out to successfully block attacking agents provided there are enough defenders.

For future work we plan to design and evaluate algorithms under the assumption of multi-stage vertex allocation. As presented algorithms have multiple parameters we also aim on optimization of these parameters.


\bibliographystyle{apalike}
{\small
\bibliography{protective-agents}}

\vfill
\end{document}